\documentclass[11pt]{article}
\usepackage{verbatim}
\usepackage{amsmath,amsbsy,amsfonts,amssymb,amsthm,dsfont,fullpage,units}
\usepackage{graphicx}
\usepackage{algorithm,algorithmic,mathtools}
\usepackage{color}
\usepackage{tikz}
\usepackage{float}
\usetikzlibrary{calc,shapes}
\usepackage{natbib}
\usepackage{caption}
\usepackage{subcaption}
\usepackage{hyperref}
\usepackage{longtable}
\usepackage{soul}

\newtheorem{theorem}{Theorem}[section]
\newtheorem{corollary}[theorem]{Corollary}

\newtheorem{lemma}[theorem]{Lemma}
\newtheorem{remark}{Remark}[section]

\DeclareMathOperator*{\argmin}{argmin}
\DeclarePairedDelimiterX{\infdivx}[2]{(}{)}{%
  #1\;\delimsize\|\;#2%
}
\newcommand{\kldiv}{\mathrm{KL}\infdivx}
\newcommand{\cediv}{\mathrm{CE}\infdivx}

\DeclareMathOperator*{\E}{\mathbb{E}}
\def\1{\mathbbm{1}}

\def\cR{\mathcal{R}}
\def\cD{\mathcal{D}}
\def\eps{\varepsilon}

\newlength\myindent
\setlength\myindent{2em}

\newcommand{\sk}[1]{\textsf{\color{magenta} SK: #1}}



\title{Calibration, Entropy Rates, and Memory in Language Models}

\author{Mark Braverman\thanks{Princeton University, Computer Science Department, email: mbraverm@cs.princeton.edu} \and Xinyi Chen\thanks{Google AI Princeton, email: xinyic@google.com}\and Sham M. Kakade\thanks{University of Washington, Allen School of Computer Science and Engineering and Department of Statistics, email: sham@cs.washington.edu}\and Karthik Narasimhan\thanks{Princeton University, Computer Science Department, email: karthikn@cs.princeton.edu}\and Cyril Zhang\thanks{Princeton University, Computer Science Department, email: cyril.zhang@cs.princeton.edu} \and Yi Zhang\thanks{Princeton University, Computer Science Department, email: y.zhang@cs.princeton.edu}}

\date{}

\begin{document}
\maketitle
\begin{abstract}
Building accurate language models that capture meaningful long-term dependencies is a core challenge in natural language processing. Towards this end, we present a calibration-based approach to measure long-term discrepancies between a generative sequence model and the true distribution, and use these discrepancies to improve the model. Empirically, we show that state-of-the-art language models, including LSTMs and Transformers, are \emph{miscalibrated}: the entropy rates of their generations drift dramatically upward over time. We then provide provable methods to mitigate this phenomenon. Furthermore, we show how this calibration-based approach can also be used to measure the amount of memory that language models use for prediction.
\end{abstract}

\section{Introduction}
\label{s:introduction}

Recent advances in language modeling have resulted in significant breakthroughs on a wide variety of
benchmarks in natural language processing~\cite{dai2018transformer,gong2018frage,takase2018direct}.
Capturing long-term dependencies has especially been a major focus, with approaches ranging from explicit memory-based neural networks~\cite{grave2016improving,ke2018sparse} to optimization improvements aimed at stabilizing
training~\cite{le2015simple,trinh2018learning}. In this paper, we address a basic question: \emph{how do the long-term dependencies in a language model's generations compare to those of the underlying language?} Furthermore, if there are measurable discrepancies, this leads to the question of whether and how we can use them to improve these models.

Starting from Shannon's seminal work that essentially introduced statistical
language modeling \cite{shannon1951prediction}, the most classical and
widely studied long-term property of a language model is its entropy
rate --- the average amount of information contained per word, conditioned on the preceding
words. A learned model provides an upper bound for the entropy rate of a language, via its cross-entropy loss. The exponential of the entropy rate can be interpreted as the effective support size of the distribution of the next word (intuitively, the average number of
``plausible'' word choices to continue a document), and the perplexity score of a model (the exponential of the cross entropy loss) is an upper bound for this quantity. In state-of-the-art models trained on
billion-scale corpora, this number ranges between 10 and 30~\cite{melis2017state,radford2019language}. A natural
diagnostic question, with which we begin our work, is whether the
long-term generations of these models exhibit the same entropy rates
as the underlying languages they are modeling predictively.

\begin{table}
\centering
\small
\begin{tabular}{|c|c|c|c|}
\hline
\textbf{Model} & \textbf{Corpus} & \textbf{Test ppl.} & \textbf{EntRate} \\ \hline 
AWD-LSTM~\citep{merityRegOpt} & PTB & 58.3 & 93.1 \\ \hline
CNN-LSTM~\citep{jozefowicz2016exploring} & GBW & 29.8 & 49.4 \\ \hline 
Transformer~\citep{vaswani2017attention} & GBW & 28.1 & 34.7
 \\ \hline 
Transformer~\citep{radford2019language} & WebText & 23.7 & 61.2 \\
 \hline 
\end{tabular}
\vspace{3mm}
\caption{ Entropy rate degradations for generations from popular language models. State-of-the-art performance is usually reported via perplexity (one-step prediction loss), but there is a striking blowup in the entropy rates of these models' long-term generations.}
\label{table:ent_rate_amplification}
\vspace{0pt}
\end{table}

Empirically, and perhaps surprisingly, it turns out that the entropy
rate of generated text is \emph{substantially} higher than the
estimate for true text derived from the model's one-step predictions. As
seen in Table~\ref{table:ent_rate_amplification} (see also
Figure~\ref{fig:entrate}), this is true for both state-of-the-art
LSTMs and Transformers trained on a variety of datasets.  As a timely
example, the GPT-2 model \cite{radford2019language}, the object of
much recent attention for its seemingly coherent and on-topic
generations, suffers a dramatic degradation in its entropy rate, from
$23.7$ to $61.2$. We will defer the details of this experiment to
the supplementary material.

This empirical finding is notable since the neural attention- and memory-based techniques have
been steadily improving on standard metrics like perplexity and, in
some cases, even produce remarkably coherent text
(often with some heuristics to reject poor generations). That the perplexity of generated text is so much higher
than it is under the true distribution suggests that there are significant
gaps in our current methodologies in accurately learning language models, particularly if
we are interested in generating text that globally resembles the modeled language itself.


\paragraph{Our contributions.}
The focus of this work is twofold: to improve generations
based on any measurement mismatch on a long-term property of the
model (e.g. the entropy rate), and to quantify the way a model's predictions depend on the distant past. Central to both of these is a calibration-based approach, which is utilized in statistics and other areas of machine
learning~\cite{dawid82,dawid85, F91,Zadrozny02transformingclassifier,
  Platt99probabilisticoutputs, guo2017calibration, niculescu2005predicting}.


First, we show that, from a worst-case perspective, even an extremely accurate model (with $\eps$ average KL divergence from the true distribution) may have generated text with a substantially different entropy rate as compared to the true distribution.
Indeed, we show that this worst-case amplification may
occur for a variety of long-term properties of a
probabilistic language model; this is because the one-step KL divergence does not in general provide tight
control over the expectation of a bounded function.
The observed entropy rate amplification (as seen in
Table~\ref{table:ent_rate_amplification}) demonstrates that this is not
only of theoretical concern.  We then describe a calibration procedure
to fix this mismatch while simultaneously improving the perplexity of the language model. From a statistical perspective,
the procedure is simple, and we discuss approaches to make it
computationally efficient.

Second, we provide a definition for
long-term memory in language models as the mutual information between
the model’s predictions and the distant past in the input. We then provide an upper
bound on the amount of this mutual information using calibrated
distributions (with a single-parameter exponent). This allows us to
estimate the amount of context used by a language model as a
function of the distance of past tokens from the current prediction
timestep.

We perform empirical studies to accompany our theoretical results. We
first use the entropy rate calibration algorithm to fix an LSTM
language model, resulting in a drop of around 20 perplexity points in the
generated text (so that the entropy rate of the model more accurately
matches that of the language itself). 
  Then, we empirically estimate and compare the long-term
memory of state-of-the-art language models. 
Our insights point towards new ways of assessing (and fixing) language models,
especially in terms of their long-term properties, in a manner
complementary to existing metrics like perplexity.

\section{Related Work}
\label{s:related_work}

\paragraph{Improving language modeling with long-term dependencies.}
Recent approaches to improving language modeling have focused on several ways to better capture long-term dependencies, from using manually-defined context representations~\cite{mikolov2012context,ji2015document,wang2016larger} or document-level topics~\cite{wang2017topic} to using LSTM recurrent neural networks with careful initialization~\cite{le2015simple}, auxiliary loss signals~\cite{trinh2018learning} or augmented memory structures~\cite{grave2016improving,ke2018sparse}. More recent work has demonstrated the applicability of Transformer networks~\cite{vaswani} to the task, potentially side-stepping issues in training recurrent networks (e.g. vanishing/exploding gradients) and scaling to longer contexts~\cite{dai2018transformer, radford2018improving}. All these papers propose either architectural or optimization innovations to improve language model training. In contrast, we define and measure explicit long-term properties of language models and show that calibrating them correctly can provide improvements to any black-box language model.

\paragraph{Information-theoretic approaches.} 
While most language models aim to predict a distribution over the next token conditioned on the context, there have been alternative approaches relying on information-theoretic measures. \citet{jost1994proposal} propose a model which makes use of mutual information between word pairs to generate word sequences that retain longer-term dependencies. \citet{mcallester2018information} propose a training objective based on mutual information for predictive modeling, and demonstrate its application for phoneme prediction.
\citet{clarkson1999towards} develop a hybrid metric using both perplexity and entropy rate, and show that it correlates better with a downstream metric like word error rate. Such works propose alternative optimization objectives; in contrast, we show how to use information-theoretic measures to improve models with respect to existing objectives like cross-entropy.

\paragraph{Measuring long-term statistics.}
\citet{khandelwal2018sharp} analyze LSTM-based language models and empirically show that such models make use of a finite context for prediction. \citet{lin2017critical} measure mutual information between any two symbols in human languages, and show that it decays with distance, roughly following a power law distribution. \citet{takahashi2018cross} provide an upper bound for the entropy (character-level) of human languages by training neural language models with various context and data sizes and extrapolating to infinity. While we also make use of measures like entropy and mutual information across longer contexts, our goal is to use these to better calibrate the language model and provably improve its perplexity.

\paragraph{Calibration and integral probability metrics.}
The idea of matching properties of the models' predictions to the
empirical outcomes, in an online setting, goes back (at least) to the
``prequential principle'' of \cite{dawid82,dawid85}, with subsequent work
in online and game-theoretic
settings~\cite{F91,Vovk01,kls:merging}. The idea of improving
probability scores is also common in machine learning~\cite{Zadrozny02transformingclassifier,
  Platt99probabilisticoutputs, guo2017calibration, niculescu2005predicting}. The notion of examining the expectation of functions as a metric for the distance between two distributions sometimes goes under the name of integral probability metrics~\cite{muller,sriperumbudur}, and this notion is becoming increasingly relevant again in unsupervised learning through the connections to GANs~\cite{NIPS2017_6845}.  In this work, we directly focus on the KL divergence, where our use of calibration is largely based on basic facts about exponential families~\cite{Brown:1986:FSE:41464}.

\section{Preliminaries}
\label{s:preliminaries}
We first define some useful quantities for our analyses. 
Let $\Pr(W_1, W_2, \ldots, W_T)$ represent the true underlying distribution over $T$ length sequences
of words, where the vocabulary is of size $M$. Let
$W_{1:T}$ denote a random sequence of length $T$, with distribution
$\Pr(W_{1:T})$. For clarity of exposition, we 
assume that all sequences (i.e. sentences or documents or books)
are of equal length $T$.

For any distributions $\cD$ and $\cD^\prime$ over length-$T$ sequences,
recall that the entropy $H(\cdot)$, KL-divergence, and entropy rate are, respectively, defined by:
$
H(\cD) :=  \E_{w_{1:T}\sim \cD}
  \left[\log \frac{1}{\cD (W_{1:T}=w_{1:T})}\right]  ,
$
 
$
  \kldiv{\cD}{\cD^\prime}  :=
 \E_{w_{1:T}\sim \cD}
  \left[\log \frac{ \cD(W_{1:T}=w_{1:T})}{\cD^\prime (W_{1:T}=w_{1:T})}\right]
   ,
$
and
$
  \mathrm{EntRate}(\cD) 
  := \frac{1}{T} H(\cD).
$
Let $\widehat \Pr(W_{1:T})$ denote a learned distribution over sequences. In the typical sequential prediction setting, the probabilistic model is implicitly defined by the conditional distributions
$\Pr(W_t | W_{<t})$, which are typically efficiently computable.
It is standard for such a \textit{language model} to be trained to minimize the \emph{cross
entropy} objective:
$$
  \cediv{\Pr}{\widehat \Pr}
  \\
  := \frac{1}{T}\E_{w_{1:T}\sim \Pr}
  \left[ \sum_{t=1}^T \log \frac{1}{\widehat \Pr(w_t|w_{< t})}\right]\\
  =\frac{1}{T} \E_{w_{1:T}\sim \Pr}
  \left[\log \frac{1}{\widehat \Pr(w_{1:T} )}\right]\, .
$$
Note that for an accurate language model, we would hope that:
$
\cediv{\Pr}{\widehat \Pr} \approx   \mathrm{EntRate}(\widehat \Pr) ,
$
i.e. the entropy rate of the sequences generated under the learned model is
nearly that of the cross entropy of the model (with respect to the true distribution $\Pr$). 

Throughout, we assume that
\begin{equation}\label{ass:bayes}
\frac{1}{T}\kldiv{\Pr}{\widehat \Pr} = \cediv{\Pr}{
  \widehat \Pr} - H(\Pr)  \leq \eps 
\end{equation}
holds for some $\eps$. In other words, the (unknown) $\eps$ measures
the degree of sub-optimality of the learned model, this $\eps$ is often referred to as the Bayes regret.

\section{Calibration and Entropy Rates}\label{sec:ent_rates}

In this section, we assess the long-term properties of language models when generating text.
Specifically, we quantify the amplification in the entropy rate of generations
under an $\eps$-accurate model~(Eq.~\ref{ass:bayes}). We then provide a procedure to fix this amplification, without increasing the perplexity of the model. Proofs for all  statements are provided in 
the supplementary material.

For generality, consider a
function $f:[M]^T\rightarrow \cR$, defined on $T$ length sequences.
Let the mean and variance of $f$ under distribution $\mathcal{D}$ be
denoted by $\mu_{\mathcal{D}}(f)$ and $\sigma^2_{\mathcal{D}}(f)$
\begin{eqnarray*}
\mu_{\mathcal{D}}(f) := \E_{w_{1:T} \sim \mathcal{D}}[ f(w_{1:T})], ~~~~\sigma^2_{\mathcal{D}}(f) := \E_{w_{1:T} \sim \mathcal{D}}[ (f(w_{1:T})-\mu_{\mathcal{D}}(f))^2] \, .
\end{eqnarray*}

\subsection{Error amplification under our model}\label{subsec:amp}
If our learned model $\widehat \Pr$ is accurate, we may hope that $\mu_{\Pr}(f) \approx \mu_{\widehat \Pr}(f)$ i.e. that the expected value of $f$ under the true distribution $\Pr$ is close to its expected value under our model. We can quantify this gap as follows:
\begin{lemma}\label{lemma:pinsker}
(Pinsker's Inequality~\cite{pinsker}) 
Suppose that for all $w_{1:T}$, $f(w_{1:T})\leq B$. Then:
\[
  \left|\mu_{\Pr}(f) - \mu_{\widehat \Pr}(f)
\right|
\leq B \sqrt{2\kldiv{\Pr}{\widehat \Pr}} \, .
\]
\end{lemma}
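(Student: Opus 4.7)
The plan is to combine the elementary observation that the gap in expectations of a bounded function is controlled by the total variation distance, with Pinsker's inequality relating total variation to KL divergence. This is essentially a textbook chain of inequalities; the statement is only named after Pinsker because his inequality does the non-trivial work.

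First I would assume without loss of generality that $f$ takes values in $[-B,B]$ (or equivalently, apply the argument to $f$ after an affine shift, absorbing a constant factor of $2$). Next, by linearity of expectation, I would write
\[
\mu_{\Pr}(f) - \mu_{\widehat \Pr}(f) \;=\; \sum_{w_{1:T}} f(w_{1:T})\bigl(\Pr(w_{1:T}) - \widehat \Pr(w_{1:T})\bigr),
\]
and use $|f|\le B$ together with the triangle inequality to bound the absolute value by $B\cdot \|\Pr - \widehat \Pr\|_1 = 2B \cdot \mathrm{TV}(\Pr, \widehat \Pr)$, where $\mathrm{TV}$ denotes total variation distance on sequences of length $T$. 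This step only uses the boundedness hypothesis and the definition of $\mathrm{TV}$ as half the $L^1$ distance between the two probability mass functions.

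Then I would invoke Pinsker's inequality in its standard form,
\[
\mathrm{TV}(\Pr, \widehat \Pr) \;\le\; \sqrt{\tfrac{1}{2}\,\kldiv{\Pr}{\widehat \Pr}},
\]
and chain the two bounds to obtain $|\mu_{\Pr}(f) - \mu_{\widehat \Pr}(f)| \le 2B \sqrt{\tfrac{1}{2}\kldiv{\Pr}{\widehat \Pr}} = B\sqrt{2\,\kldiv{\Pr}{\widehat \Pr}}$, which is exactly the stated conclusion. I would treat Pinsker's inequality itself as a black box citation, since it is the named theorem being applied rather than something to re-derive here.

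There is essentially no obstacle: the whole argument is a two-line reduction to a standard inequality. The only thing to be careful about is the range convention for $f$ — writing the proof as if $f \in [-B,B]$ makes the constant $B\sqrt{2}$ appear naturally, whereas assuming $f \in [0,B]$ would give the stronger $B\sqrt{\tfrac{1}{2}}$ bound; the statement as written is consistent with a symmetric bound on $|f|$, and I would note this convention explicitly at the start of the proof.
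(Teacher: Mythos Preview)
Your proposal is correct and matches the paper's proof essentially line for line: write the difference of expectations as a sum, bound by $B\cdot\|\Pr-\widehat\Pr\|_1$ via H\"older (equivalently, your triangle-inequality step), then apply Pinsker's inequality to convert $L^1$ distance to $\sqrt{2\,\kldiv{\Pr}{\widehat\Pr}}$. Your remark about needing $|f|\le B$ rather than the one-sided $f\le B$ is well taken---the paper's own proof implicitly uses $\|f\|_\infty\le B$ in the H\"older step, so the statement's hypothesis is slightly underspecified in exactly the way you flag.
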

Since this holds for any bounded function, we can obtain the error amplification of the entropy rate of $\widehat \Pr$ simply by choosing $f = -\log \widehat \Pr$.

Before we proceed, in order to rule out amplification of this entropy rate due to
arbitrarily small probabilities (which can blow up $-\log\widehat\Pr$),
it is helpful to define the $\gamma$-mixture distribution as: $\cD^{(\gamma)} := (1-\gamma) \cD + \gamma \mathrm{Uni}$,
where the $\mathrm{Uni}$ is the uniform distribution over all $M^T$
sequences. We will then consider the model $\widehat \Pr^{(\eps)}$,
which has only a minor degradation in the cross entropy compared to $\widehat\Pr$, and, yet, may have a large amplification in the entropy rate. 

\begin{figure}
    \centering
    \includegraphics[width=\linewidth]{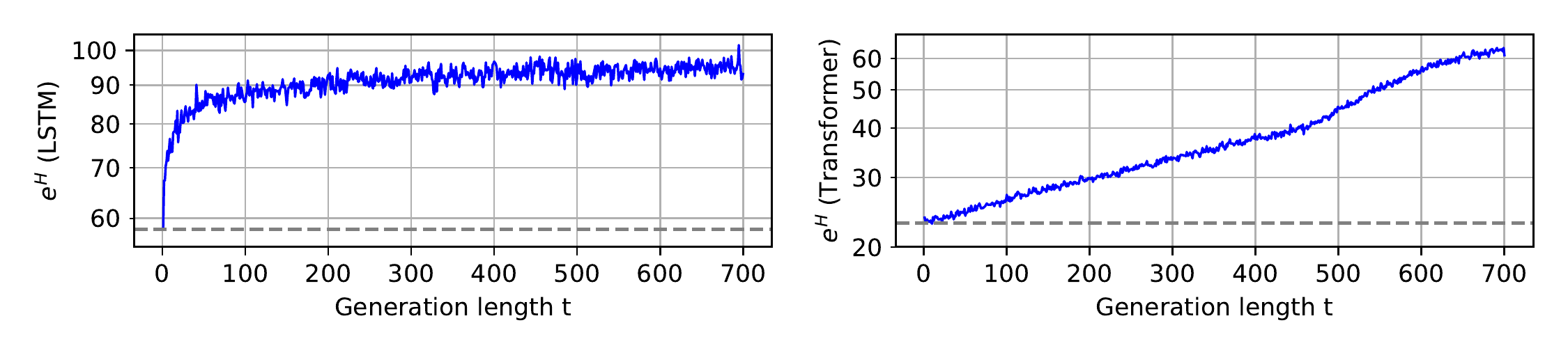}
    \caption{ Entropy of the $t$-th generated word, conditioned on the past, for two popular language models, interpolating between model's estimate for the language's entropy ($t=1$) and entropy rate of generations ($t \rightarrow \infty$). A perfectly calibrated generative model would exhibit a time-invariant entropy rate (gray dotted lines). \emph{Left:} LSTM trained on Penn Treebank. \emph{Right:} GPT-2 Transformer.
    }
    \label{fig:entrate}
    \vspace{-10pt}
\end{figure}

\begin{corollary}\label{corr:ent_rate_amplification}
(Entropy rate amplification under generations) Suppose the bound in
equation~\ref{ass:bayes} holds. The $\eps$-mixture distribution has KL
bounded as:
\[
\frac{1}{T} \kldiv{\Pr}{\widehat \Pr^{(\eps)}} \leq
\left(1+\frac{1}{T}\right)\eps\, .
\]
We have that:
\[
|\cediv{\Pr}{\widehat \Pr^{(\eps)}} - \mathrm{EntRate}(\Pr)| 
\leq
\left(1+\frac{1}{T}\right)\eps\ ,\ \text{and}
\]
$$
|\cediv{\Pr}{\widehat \Pr^{(\eps)}} - \mathrm{EntRate}(\widehat \Pr^{(\eps)})| 
\\\leq
\sqrt{2 \eps (T+1)} \left( \log M +
\frac{\log(1/\eps)}{T }\right) \, .
$$
\end{corollary}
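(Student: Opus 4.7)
The plan is to handle the three inequalities in sequence; the first two follow quickly from the mixture construction, while the third is where Pinsker does the real work. Throughout, the key structural fact is that $\widehat{\Pr}^{(\eps)}$ admits two complementary lower bounds: a \emph{multiplicative} one, $\widehat{\Pr}^{(\eps)}(w_{1:T}) \geq (1-\eps)\,\widehat{\Pr}(w_{1:T})$, which keeps its KL distance from $\Pr$ small, and an \emph{absolute} one, $\widehat{\Pr}^{(\eps)}(w_{1:T}) \geq \eps/M^T$, inherited from the uniform component, which bounds $-\log \widehat{\Pr}^{(\eps)}$ pointwise.

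For the first inequality, I would take $-\log$ of the multiplicative bound and integrate against $\Pr$ to get
\[
\kldiv{\Pr}{\widehat{\Pr}^{(\eps)}} \;\leq\; \kldiv{\Pr}{\widehat{\Pr}} \;+\; \log\tfrac{1}{1-\eps} \;\leq\; T\eps + \eps,
\]
invoking the Bayes-regret assumption (\ref{ass:bayes}) together with $\log\tfrac{1}{1-\eps}\leq \eps$ to leading order; dividing by $T$ yields the advertised $(1+1/T)\eps$. The second inequality is then immediate: cross entropy minus entropy equals KL, so
\[
\cediv{\Pr}{\widehat{\Pr}^{(\eps)}} - \mathrm{EntRate}(\Pr) \;=\; \tfrac{1}{T}\kldiv{\Pr}{\widehat{\Pr}^{(\eps)}},
\]
which is nonnegative and bounded above by the quantity from the first step.

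For the third inequality, I would apply Pinsker's inequality (Lemma~\ref{lemma:pinsker}) to the statistic
\[
f(w_{1:T}) \;:=\; -\tfrac{1}{T}\log\widehat{\Pr}^{(\eps)}(w_{1:T}),
\]
noting that $\mu_{\Pr}(f)=\cediv{\Pr}{\widehat{\Pr}^{(\eps)}}$ while $\mu_{\widehat{\Pr}^{(\eps)}}(f)=\mathrm{EntRate}(\widehat{\Pr}^{(\eps)})$, so the target is exactly $|\mu_{\Pr}(f)-\mu_{\widehat{\Pr}^{(\eps)}}(f)|$. The absolute lower bound $\widehat{\Pr}^{(\eps)}\geq \eps/M^T$ yields $0\leq f\leq B$ with $B := \log M + \log(1/\eps)/T$, and Pinsker combined with the KL budget $\kldiv{\Pr}{\widehat{\Pr}^{(\eps)}}\leq (T+1)\eps$ from the first step delivers the bound $B\sqrt{2(T+1)\eps}$ that appears in the corollary.

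The main obstacle is conceptual, not computational: without the uniform mixture, $-\log\widehat{\Pr}(w_{1:T})$ could be arbitrarily large on low-probability sequences and Pinsker would be vacuous. The $\eps$-mixture trades an $O(\eps/T)$ degradation in the per-symbol KL for a hard logarithmic bound $B=O(\log M + \log(1/\eps)/T)$ on $f$, and matching the mixture weight to the Bayes regret $\eps$ is precisely what produces the $\sqrt{\eps}\,(\log M + \log(1/\eps)/T)$ scaling of the final estimate.
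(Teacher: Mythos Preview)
Your proposal is correct and follows essentially the same route as the paper: the multiplicative lower bound $\widehat{\Pr}^{(\eps)}\geq(1-\eps)\widehat{\Pr}$ to control the KL (hence the second inequality via $\mathrm{CE}-\mathrm{EntRate}(\Pr)=\tfrac{1}{T}\mathrm{KL}$), and the absolute lower bound $\widehat{\Pr}^{(\eps)}\geq\eps/M^T$ to make $-\log\widehat{\Pr}^{(\eps)}$ bounded so that Pinsker applies. The only cosmetic difference is that the paper takes $f=-\log\widehat{\Pr}^{(\eps)}$ and divides by $T$ at the end, whereas you bake the $1/T$ into $f$; your hedge ``to leading order'' for $-\log(1-\eps)\leq\eps$ is in fact slightly more careful than the paper, which asserts that step outright.
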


This bound shows that, in the worst case, even a small cross entropy
may provide little control over the generations under our model (in
terms of entropy rate). In fact, for $\eps = O(\frac{1}{T})$ (which we
may hope is an accurate model), the bound is vacuous; the following
remark shows this worst case bound is unimprovable, see the supplementary material.

The above theorems suggest that entropy rate amplification is a theoretical possibility in the worst case, which our experiments show is in fact prevalent in pratice. These entropy rate amplifications are evident from the plots in Figure~\ref{fig:entrate}. Regardless of the text corpus or the language model, we observe that the entropy rate under the model's generations quickly increases with time, indicating that this is a persistent problem even for state-of-the-art language models while generating text.

\subsection{Model calibration}
We now describe a procedure to fix this error amplification.
First, let us define a distribution $\widehat \Pr_\alpha$ such that:
$$
  \widehat \Pr_\alpha(w_{1:T})  = \frac{\exp(\alpha  f(w_{1:T})) \cdot \widehat
    \Pr(w_{1:T})}{Z_\alpha}\\
  \textrm{ where } Z_\alpha
  =\sum_{w_{1:T}}\exp(\alpha f(w_{1:T})) \cdot \widehat
  \Pr(w_{1:T})\, .
$$

We can then recover a \textit{calibrated} model that does not suffer from error amplification in $f$:
\begin{lemma}\label{lemma:calibration}
(Calibration to $f$ with model improvement)
Suppose the variance of $f$ is uniformly bounded in that there exists
$\sigma^2_+$ such that the following holds for all
$\alpha$, $\sigma^2_{\Pr_\alpha}(f)
  \leq \sigma^2_+ \, .$ Let $
\alpha^* = \argmin_{\alpha}   \cediv{\Pr}{\widehat \Pr_\alpha}\, .
$
We have 
\[
\mu_{\Pr}(f) -  \mu_{\widehat \Pr_{\alpha^*}}(f) = 0, ~~\text{and}
~~~\cediv{\Pr}{\widehat \Pr_{\alpha^*}} \leq \\  \cediv{\Pr}{\widehat
  \Pr} - \frac{1}{T}\frac{( \mu(f) -  \mu_{\widehat \Pr}(f))^2}{2\sigma^2_+} \, .
\]
\end{lemma}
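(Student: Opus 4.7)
The plan is to exploit the exponential family structure of $\widehat{\Pr}_\alpha$. The key identity is that the cross entropy, as a function of $\alpha$, can be written explicitly as
\[
\cediv{\Pr}{\widehat\Pr_\alpha} \;=\; \cediv{\Pr}{\widehat\Pr} \;-\; \frac{\alpha}{T}\,\mu_{\Pr}(f) \;+\; \frac{1}{T}\log Z_\alpha,
\]
by plugging the definition of $\widehat\Pr_\alpha$ into the $-\log\widehat\Pr_\alpha(w_{1:T})$ inside the cross entropy and using linearity of expectation. This reduces the whole problem to analyzing a one-dimensional function of $\alpha$.

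Next, I would compute the first two derivatives of this one-dimensional function using the standard exponential family facts: $\partial_\alpha \log Z_\alpha = \mu_{\widehat\Pr_\alpha}(f)$ and $\partial^2_\alpha \log Z_\alpha = \sigma^2_{\widehat\Pr_\alpha}(f)$. These follow by differentiating under the sum and recognizing the resulting expressions as the first and second cumulants of $f$ under $\widehat\Pr_\alpha$. Consequently,
\[
\frac{d}{d\alpha}\cediv{\Pr}{\widehat\Pr_\alpha} \;=\; \frac{1}{T}\bigl(\mu_{\widehat\Pr_\alpha}(f) - \mu_{\Pr}(f)\bigr),
\qquad
\frac{d^2}{d\alpha^2}\cediv{\Pr}{\widehat\Pr_\alpha} \;=\; \frac{1}{T}\sigma^2_{\widehat\Pr_\alpha}(f) \;\le\; \frac{\sigma_+^2}{T}.
\]
Setting the first derivative to zero at the minimizer $\alpha^\ast$ immediately yields the moment-matching identity $\mu_{\widehat\Pr_{\alpha^\ast}}(f) = \mu_{\Pr}(f)$, which is the first claim.

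For the improvement bound I would use a standard descent-lemma argument: since the second derivative is uniformly bounded by $\sigma_+^2/T$, Taylor's theorem gives, for every $\alpha$,
\[
\cediv{\Pr}{\widehat\Pr_\alpha} \;\le\; \cediv{\Pr}{\widehat\Pr} \;+\; \frac{\alpha}{T}\bigl(\mu_{\widehat\Pr}(f)-\mu_{\Pr}(f)\bigr) \;+\; \frac{\sigma_+^2}{2T}\alpha^2.
\]
Optimizing the right-hand side over $\alpha$ (choose $\alpha = (\mu_{\Pr}(f)-\mu_{\widehat\Pr}(f))/\sigma_+^2$) yields the stated quadratic gap $\tfrac{1}{T}\tfrac{(\mu_{\Pr}(f)-\mu_{\widehat\Pr}(f))^2}{2\sigma_+^2}$. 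Since $\alpha^\ast$ is the actual minimizer, the bound at $\alpha^\ast$ is at least as strong.

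The main obstacle is really bookkeeping: justifying the differentiation under the sum and applying the uniform variance bound across the whole path from $0$ to the chosen $\alpha$ (so that the second-order Taylor remainder is uniformly controlled, not just at a single point). As long as the variance assumption is interpreted to hold for all $\alpha$ along this path (as the statement indicates), the two claims follow cleanly from the exponential family calculus above.
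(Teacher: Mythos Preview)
Your proposal is correct and follows essentially the same route as the paper: write $\cediv{\Pr}{\widehat\Pr_\alpha}$ explicitly via the exponential-family identity, differentiate to obtain the mean and variance of $f$ under $\widehat\Pr_\alpha$, set the gradient to zero for the calibration claim, and use the uniform second-derivative bound together with Taylor's theorem and a one-dimensional optimization for the improvement claim. The paper's proof is exactly this argument, with no additional ingredients.
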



\paragraph{Entropy rate calibration.} We can now apply the previous result to fix the entropy rate amplification seen in Table~\ref{table:ent_rate_amplification}. Note that it is trivial to avoid the entropy rate amplification if we were allowed to degrade the quality of our model, in terms of perplexity (e.g. a unigram model does not have this amplification.
However, we show that it is possible to match the entropy rate without having to sacrifice the quality of our model.
In fact, we can both improve our model \emph{and} more accurately match the entropy rate, by fitting a family of one-parameter models.

\begin{theorem} \label{thm:ent_rate_calibration}
  (Entropy rate calibration)
Suppose 
equation~\ref{ass:bayes} holds. Algorithm~\ref{alg:ent_rate}
returns a $\widehat \Pr_{\alpha^*}$ such that:
the following calibration property is satisfied:
\[
  \cediv{\Pr}{\widehat \Pr_{\alpha^*}}
  = \mathrm{EntRate}(\widehat\Pr_{\alpha^*})  .
\]
Furthermore, $\widehat \Pr_{\alpha^*}$  has entropy close to the true
entropy rate as specified by:
\[
|\mathrm{EntRate}(\Pr) - \mathrm{EntRate}(\widehat\Pr_{\alpha^*})| 
\leq
\left(1+\frac{1}{T}\right)\eps ,
  \]
and $\widehat \Pr_{\alpha^*}$ is an improvement over the original model as characterized by:
$$
  \cediv{\Pr}{\widehat \Pr_{\alpha^*}}
  \leq   \cediv{\Pr}{\widehat \Pr^{(\eps)}} \\
  -  \frac{1}{2} \left(\frac{ \cediv{\Pr}{\widehat \Pr^{(\eps)}} -
      \mathrm{EntRate}(\widehat\Pr) }
  {\log M +
\frac{\log(1/\eps)}{T }} \right)^2\, .
$$
\end{theorem}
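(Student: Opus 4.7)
The approach is to invoke Lemma~\ref{lemma:calibration} with $f(w_{1:T}) = \log\widehat\Pr^{(\eps)}(w_{1:T})$ and base model $\widehat\Pr^{(\eps)}$. With this choice, the induced exponential family collapses to the one-parameter tempering $\widehat\Pr_\alpha(w) \propto \widehat\Pr^{(\eps)}(w)^{1+\alpha}$, where $\alpha=0$ recovers $\widehat\Pr^{(\eps)}$ itself, and Algorithm~\ref{alg:ent_rate} is essentially a one-dimensional convex search for $\alpha^* = \argmin_\alpha \cediv{\Pr}{\widehat\Pr_\alpha}$.

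\textbf{Calibration identity and entropy-rate closeness.} By Lemma~\ref{lemma:calibration} the minimizer satisfies $\mu_\Pr(f) = \mu_{\widehat\Pr_{\alpha^*}}(f)$, i.e.\ $\E_\Pr[\log\widehat\Pr^{(\eps)}] = \E_{\widehat\Pr_{\alpha^*}}[\log\widehat\Pr^{(\eps)}]$. The key observation is that $\log\widehat\Pr_{\alpha^*}$ is affine in $\log\widehat\Pr^{(\eps)}$: $\log\widehat\Pr_{\alpha^*}(w) = (1+\alpha^*)\log\widehat\Pr^{(\eps)}(w) - \log Z_{\alpha^*}$, so matching expectations of the latter transfers directly to $\E_\Pr[\log\widehat\Pr_{\alpha^*}] = \E_{\widehat\Pr_{\alpha^*}}[\log\widehat\Pr_{\alpha^*}]$, which is precisely $\cediv{\Pr}{\widehat\Pr_{\alpha^*}} = \mathrm{EntRate}(\widehat\Pr_{\alpha^*})$. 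For the closeness bound, I would combine this identity with non-negativity of KL (giving $\mathrm{EntRate}(\widehat\Pr_{\alpha^*}) \geq \mathrm{EntRate}(\Pr)$) and with the optimality of $\alpha^*$ together with $\alpha=0$ as a feasible candidate (giving $\cediv{\Pr}{\widehat\Pr_{\alpha^*}} \leq \cediv{\Pr}{\widehat\Pr^{(\eps)}} \leq \mathrm{EntRate}(\Pr) + (1+1/T)\eps$ via Corollary~\ref{corr:ent_rate_amplification}).

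\textbf{Improvement bound.} For the cross-entropy improvement, I would apply the second half of Lemma~\ref{lemma:calibration} anchored at $\widehat\Pr^{(\eps)}$. The squared mean gap evaluates to $T^2(\cediv{\Pr}{\widehat\Pr^{(\eps)}} - \mathrm{EntRate}(\widehat\Pr^{(\eps)}))^2$. Since $\mathrm{EntRate}(\widehat\Pr^{(\eps)}) \geq \mathrm{EntRate}(\widehat\Pr)$ by concavity of entropy, and since in the amplification regime both differences share the same (negative) sign, this lower-bounds $T^2(\cediv{\Pr}{\widehat\Pr^{(\eps)}} - \mathrm{EntRate}(\widehat\Pr))^2$. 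Pairing this with a uniform variance bound of the form $\sigma^2_+ \leq T(\log M + \log(1/\eps)/T)^2$ and substituting into Lemma~\ref{lemma:calibration}, the $1/T$ factor then cancels and one recovers the claimed improvement.

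\textbf{Main obstacle.} The subtlest step is the uniform variance bound $\sigma^2_+ = O\!\bigl(T(\log M + \log(1/\eps)/T)^2\bigr)$, which must hold across the entire tempered family $\{\widehat\Pr_\alpha\}$. A crude range-based bound on $f$ yields only $O(T^2)$ variance and would weaken the improvement by a factor of $T$. Getting the sharper $O(T)$ bound requires decomposing $f = \sum_t \log\widehat\Pr^{(\eps)}(w_t\mid w_{<t})$ telescopically, a per-step range bound obtained by translating the joint $\eps$-smoothing into bounds on the conditionals, and a law-of-total-variance argument to control the variance of the sum; uniformity in $\alpha$ should follow because tempering only sharpens the conditionals without enlarging their range.
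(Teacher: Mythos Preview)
Your plan for the calibration identity and the closeness bound is exactly the paper's argument: apply Lemma~\ref{lemma:calibration} with $f=-\log\widehat\Pr^{(\eps)}$, use the first-order condition $\mu_{\Pr}(f)=\mu_{\widehat\Pr_{\alpha^*}}(f)$, and transfer it through the affine relation $\log\widehat\Pr_{\alpha^*}=(1+\alpha^*)\log\widehat\Pr^{(\eps)}-\log Z_{\alpha^*}$. The paper does not spell out the closeness bound separately; your derivation from KL non-negativity plus optimality of $\alpha^*$ against the baseline $\alpha=0$ and Corollary~\ref{corr:ent_rate_amplification} is the natural way to fill that in.

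Where you diverge from the paper is the improvement bound, and here you are being \emph{more} careful than the paper, not less. The paper's proof simply records $\mu_{\Pr}(f)-\mu_{\widehat\Pr}(f)=T\bigl(\cediv{\Pr}{\widehat\Pr}-\mathrm{EntRate}(\widehat\Pr)\bigr)$ and then asserts, citing only the pointwise range bound~\eqref{eq:hard_bound}, that $\sigma^2_+\le T\log M+\log(1/\eps)$. There is no telescoping, no law of total variance, and no distinction drawn between $\mathrm{EntRate}(\widehat\Pr)$ and $\mathrm{EntRate}(\widehat\Pr^{(\eps)})$. So the ``main obstacle'' you flag---that the crude range bound on $f$ yields variance $O(T^2)$ rather than $O(T)$---is something the paper's proof glosses over rather than resolves; it simply writes the range of $f$ in the place of $\sigma^2_+$ and substitutes.

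One caution about your proposed fix: the $\eps$-smoothing is applied at the \emph{joint} level, $\widehat\Pr^{(\eps)}=(1-\eps)\widehat\Pr+\eps\,\mathrm{Uni}$, so the induced conditionals $\widehat\Pr^{(\eps)}(w_t\mid w_{<t})$ do not inherit a uniform per-step lower bound of the form $\eps/M$; the best pointwise bound from \eqref{eq:hard_bound} is on the full product, not term-by-term. Your telescoping argument would therefore need an additional idea to control the per-step ranges uniformly in $t$ and $\alpha$. Separately, your reduction from $\mathrm{EntRate}(\widehat\Pr^{(\eps)})$ to $\mathrm{EntRate}(\widehat\Pr)$ via concavity only goes through under the sign assumption you invoke (the ``amplification regime''); the theorem as stated makes no such assumption, so this step is a bit informal---though the paper's proof is equally informal on this point.
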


This result shows that we simply need a single parameter $\alpha$ to define a new model class that is a powered up version of our original model. Then, we can fit this $\alpha$ to minimize the cross-entropy of the new model with respect to the true distribution $\Pr$, in order to eliminate the entropy rate amplification.

Even though this algorithm
fits only a single parameter, it is
not easily implementable since it requires an integration over sequences,
at least in its exact form.  One future direction would be to a sample based approach. This may be an interesting alternative to ideas like beam search~\cite{steinbiss1994improvements,ortmanns2000look,antoniol1995language}, which also aims to minimize a global cost function on sequences that is inconsistent with the token-level perplexity loss used to train the underlying generative model.

\begin{algorithm}[t!]
\caption{(Inefficient) Entropy Rate Calibration}
\label{alg:ent_rate}
\begin{algorithmic}[1]
\STATE Input: Model $\widehat \Pr^{(\eps)}$.
\STATE Define a model class:
\[
  \widehat\Pr_\alpha(w_{1:T}) =
  \left(\widehat\Pr(w_{1:T})^{(\eps)}\right)^{1+\alpha} / Z_{\alpha} \, .
\]
\STATE Fit $\alpha^*$: $
\alpha^* = \argmin_{\alpha}   \cediv{\Pr}{\widehat
\Pr_\alpha}\
$
\STATE Return $\widehat \Pr_{\alpha^*}$
\end{algorithmic}
\end{algorithm}

\paragraph{Lookahead algorithms.}
In order to sidestep the computational issues of Algorithm~\ref{alg:ent_rate}, we provide another simple approach
based on what can be viewed as a ``one-step'' lookahead correction (Algorithm~\ref{alg:ent_rate_1step}). Let 
$\widehat W_t$ be a random variable with conditional distribution
$\widehat \Pr(\cdot | W_{<t})$. $H(\widehat W_{t+1}|w_{\leq t})$ denotes
the entropy of this conditional distribution, i.e.
$$
H(\widehat W_{t+1}|w_{\leq t}) =\\  \E_{w_{t+1}\sim \widehat \Pr(\cdot |
  w_{\leq t})}
  \left[\log \frac{1}{\widehat \Pr(w_{t+1} | w_{\leq t})}\right]  .
$$
Note that
$H(\widehat W_{t+1}|w_{\leq t}) $ includes the word $w_t$, so 
we require computing the entropy at time $t+1$ when predicting $W_t$ using a learned model.

\begin{algorithm}[t]
\caption{Local Entropy Rate Calibration}
\label{alg:ent_rate_1step}
\begin{algorithmic}[1]
\STATE Input: Model $\widehat \Pr^{(\eps)}$, where $\widehat W_t \sim \widehat \Pr^{(\eps)}(\cdot | W_{<t})$.
\STATE Define a model class:
\[
\widehat \Pr_\alpha(w_{1:T})  = \widehat P_{\alpha}(w_1) \widehat
P_{\alpha}(w_2|w_1)  \ldots 
\]
where
$$
  \widehat \Pr_\alpha(w_t|w_{<t})  = 
  \widehat \Pr(w_t|w_{<t})
  \cdot \exp\left(\alpha \cdot H(\widehat W_{t+1}|w_{\leq t})\right) / Z_\alpha.
$$
\STATE Fit $\alpha^*$: 
$
\alpha^* = \argmin_{\alpha}   \cediv{\Pr}{\widehat
\Pr_\alpha}\
$
\STATE Return $\widehat \Pr_{\alpha^*}$
\end{algorithmic}
\end{algorithm}


For a conditional distribution, $\cD(W_{1:T})$, let us define:
$$
\bar\mu_{\cD}
=\\
\frac{1}{T} \sum_{t=1}^T \E_{w_{< t} \sim \Pr}  \, \, \E_{w_t \sim
  \cD(\cdot|w_{< t})} [H(\widehat W_{t+1}|w_{\leq t}) ] 
$$
Thus, $\bar\mu_{\cD}$ is the average of $H(\widehat W_{t+1}|w_{\leq t})$ with respect to a distribution which uses $\cD$ for sampling the last word $W_t$ (at every timestep). Intuitively, the resulting model $\widehat \Pr_\alpha$ with a positive $\alpha$ would suppress sampling words leading to larger entropy but rather encourage words that stablizes the entropy 1-step ahead in the future. Therefore, if our learned language model $\widehat \Pr$ was accurate, we would hope that: $\bar\mu_{\Pr} \approx \bar\mu_{\widehat \Pr}\, .$
The following corollary shows that this is achievable, along with
improving the model's perplexity.

\begin{corollary}\label{corr:lookahead}
Suppose Equation~\ref{ass:bayes} holds. Then, Algorithm~\ref{alg:ent_rate_1step}
returns a $\widehat \Pr_{\alpha^*}$ such that:
\[
  \bar\mu_{\Pr} - \bar\mu_{\widehat \Pr_{\alpha^*}}
= 0, \quad \text{and \quad}~~
  \cediv{\Pr}{\widehat \Pr_{\alpha^*}} \leq   \cediv{\Pr}{\widehat
  \Pr^{(\eps)}} \\- \frac{1}{2}\left(\frac{ \bar\mu -  \bar\mu_{\widehat \Pr^{(\eps)}}}{ \log M +
\frac{\log(1/\eps)}{T }}\right)^2 \, .
\]
\end{corollary}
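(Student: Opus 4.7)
The key observation is that Algorithm~\ref{alg:ent_rate_1step} defines the joint distribution
\[
\widehat\Pr_\alpha(w_{1:T}) \;=\; \widehat\Pr^{(\eps)}(w_{1:T}) \cdot \exp\!\left(\alpha \sum_{t=1}^T g_t(w_{\leq t})\right) \Big/ \prod_{t=1}^T Z_\alpha(w_{<t}),
\]
where $g_t(w_{\leq t}) := H(\widehat W_{t+1}\mid w_{\leq t})$ depends only on the prefix $w_{\leq t}$ (since $\widehat W_{t+1}$ is drawn from the fixed conditional $\widehat\Pr^{(\eps)}(\cdot\mid w_{\leq t})$). This is not a global exponential tilt in the sense of Lemma~\ref{lemma:calibration}, because each $Z_\alpha(w_{<t})$ depends on the sampled prefix. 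Nonetheless, the very same exponential-family calculus applies if we differentiate one step at a time.

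My plan is to study $L(\alpha) := \cediv{\Pr}{\widehat\Pr_\alpha}$ analytically. Writing out $-\tfrac{1}{T}\log\widehat\Pr_\alpha(w_{1:T})$ and taking expectations under $w_{1:T}\sim\Pr$, I get
\[
L(\alpha) \;=\; \cediv{\Pr}{\widehat\Pr^{(\eps)}} \;-\; \alpha\,\bar\mu_\Pr \;+\; \tfrac{1}{T}\,\E_\Pr\!\left[\sum_{t=1}^T \log Z_\alpha(w_{<t})\right],
\]
using that $\tfrac{1}{T}\E_\Pr[\sum_t g_t(w_{\leq t})] = \bar\mu_\Pr$ by definition. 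Standard exponential-family identities give $\partial_\alpha \log Z_\alpha(w_{<t}) = \E_{w_t\sim \widehat\Pr_\alpha(\cdot\mid w_{<t})}[g_t(w_{\leq t})]$ and $\partial_\alpha^2 \log Z_\alpha(w_{<t}) = \mathrm{Var}_{w_t\sim \widehat\Pr_\alpha(\cdot\mid w_{<t})}(g_t(w_{\leq t}))$. Summing over $t$ and averaging over $w_{<t}\sim\Pr$ yields precisely $L'(\alpha) = \bar\mu_{\widehat\Pr_\alpha} - \bar\mu_\Pr$ and $L''(\alpha) \geq 0$ (so $L$ is convex), with the per-step variance uniformly bounded by the conservative constant $\sigma_+^2 \leq (\log M + \log(1/\eps)/T)^2$ used in Theorem~\ref{thm:ent_rate_calibration} (it suffices that $g_t\in[0,\log M]$).

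The calibration identity $\bar\mu_{\widehat\Pr_{\alpha^*}} = \bar\mu_\Pr$ then follows immediately from first-order optimality $L'(\alpha^*)=0$. For the improvement bound, by convexity I invoke the quadratic upper surrogate $L(\alpha)\leq L(0) + L'(0)\alpha + \tfrac12\sigma_+^2\alpha^2$ (valid from the $L''$ bound and a Taylor remainder), and minimize over $\alpha$: the minimum value is $L(0) - (L'(0))^2/(2\sigma_+^2)$, which is an upper bound for $L(\alpha^*)$. Substituting $L(0) = \cediv{\Pr}{\widehat\Pr^{(\eps)}}$ and $L'(0) = \bar\mu_{\widehat\Pr^{(\eps)}} - \bar\mu_\Pr$ yields the claimed inequality. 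The main obstacle is the bookkeeping around $\bar\mu_{\widehat\Pr_\alpha}$, which is defined by an unusual mixed expectation (prefix under $\Pr$, current token under $\widehat\Pr_\alpha$); the point is that this mixed expectation is exactly what the per-step partition function derivative produces, so the matching of these objects (rather than the naive $\E_{\widehat\Pr_\alpha}[f]$ one might expect from Lemma~\ref{lemma:calibration}) is the substantive step that replaces a direct appeal to the lemma.
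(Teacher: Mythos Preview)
Your proposal is correct and follows essentially the same approach as the paper: the paper's proof of this corollary is just a one-line sketch (via Lemma~\ref{lemma:1gap}) saying ``the proof is identical to that of Lemma~\ref{lemma:calibration}, with the addition of using linearity of expectation,'' and you have spelled out exactly that argument, correctly identifying that the per-step partition function derivatives produce the mixed-expectation quantity $\bar\mu_{\widehat\Pr_\alpha}$ rather than a global expectation under $\widehat\Pr_\alpha$. Your variance bound via $g_t\in[0,\log M]$ is in fact tighter than needed for the stated denominator, so the inequality follows as claimed.
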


\begin{figure}{h}
\vspace{-12pt}
    \centering
    \includegraphics[width=\linewidth]{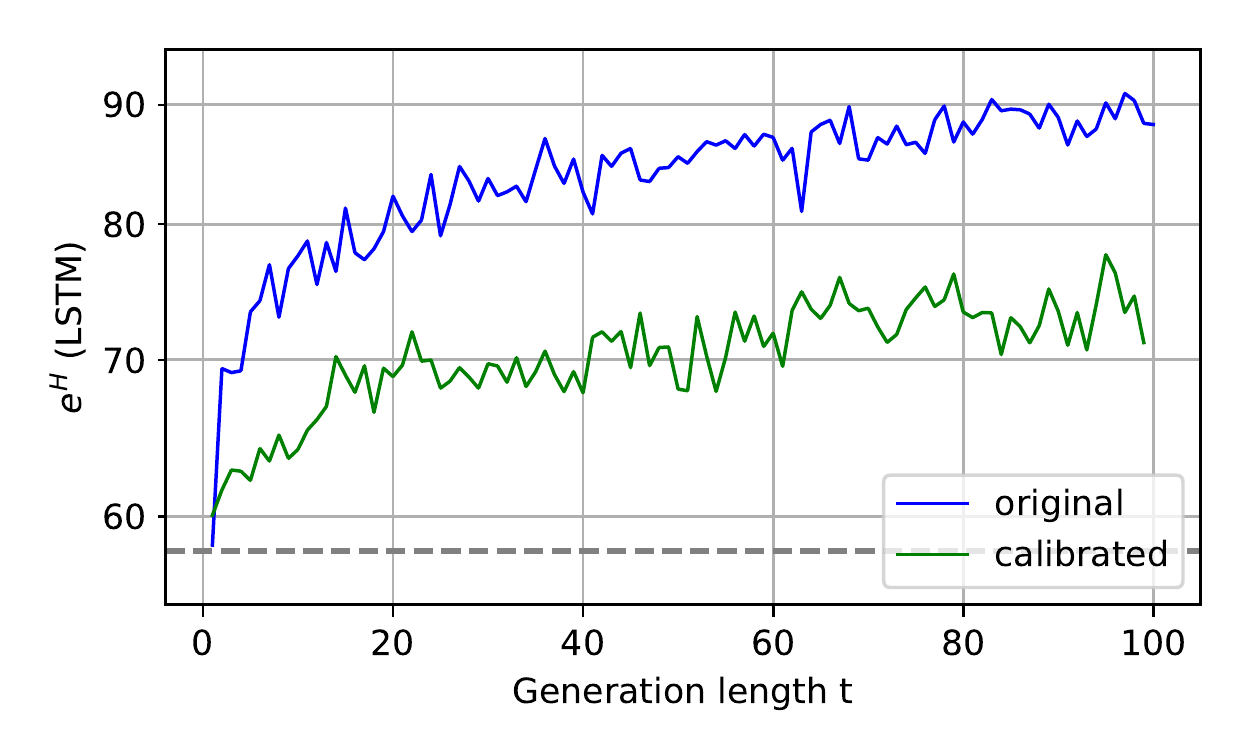}
    \caption{ Effect of calibrating an LSTM generative model with 1-step lookahead. Blue: entropy curve from the setting of Figure~\ref{fig:entrate}. Green: entropy measurements after applying local calibration. 
    }
    \label{fig:entrate-fix}
    \vspace{-25pt}
\end{figure}

This result provides us with Algorithm~\ref{alg:ent_rate_1step}, which is computationally quite tractable. We first use the learned model $\widehat \Pr$ to define a new model class $\widehat \Pr_\alpha$, which scales $\widehat \Pr$ by an exponential distribution over the weighted 1-step lookahead entropy $H(\widehat W_{t+1}|w_{\leq t})$. Then, similar to Algorithm~\ref{alg:ent_rate}, we simply fit the single parameter $\alpha$ to minimize the cross-entropy of the new model with respect to $\Pr$, which fixes the entropy amplification in the resulting model $\widehat \Pr_\alpha$. We observe this empirically in Figure~\ref{fig:entrate-fix} -- our calibration results in a perplexity drop of almost 20 points over long-term generations under an LSTM model. Model and implementation details are in the supplementary material.


\paragraph{Generations from a calibrated model.} 
Table~\ref{tab:gbw_gen_table} provides sample generations from a calibrated Transformer model trained on the GBW dataset, compared to its original version. Qualitatively, the calibrated generations: (1) are shorter and more concise, and (2) display a better grasp of discourse structure across sentences. More generations are provided in the supplementary material.

\begin{table}[t]
\centering
\footnotesize
\begin{tabular}{p{0.46\textwidth}|p{0.46\textwidth}}
\textbf{Original model} & \textbf{Calibrated model} \\ \hline 
\emph{Actual results could differ materially from those indicated by} these forward-looking statements as a result of various important factors , including , without limitation : changes in general economic and business conditions , including more difficult real estate environments ; \quad \textbf{[...174 tokens...]} \quad risks related to investigations by other companies ; inadequate information systems ; the impact of reduced availability of ; * assumptions upon such companies using such as ours to gauge CNET 's financial condition ; and other factors . 
& \emph{Actual results could differ materially from those indicated by} these forward-looking statements as a result of a variety of factors , including but not limited to ( i ) the risk that the tender offer could close in one or more manner or at all ; ( ii ) risks associated with conducting business in foreign jurisdictions ; ( iii ) difficulties in combining some or all of the businesses under one roof ; ( iv ) decreased demand for electricity , natural gas and other energy products , including adverse effects on the pricing of oil and natural gas ; and ( v ) the risks associated with doing business internationally . \\ \hline
\emph{Bluepoint Games , Inc. is a highly experienced} and multi-faceted publisher of licensed virtual worlds for gamers , developers and technology professionals . \quad \textbf{[...114 tokens...]} \quad James Upon , CEO of MyNetSheltetWeb and the three previous Developers of MySQL . Based in Redwood City , California , BlueMountain is the leader in franchise and game development for the massively multiplayer online game . 
& \emph{Bluepoint Games , Inc. is a highly experienced} licensing , gaming and entertainment firm focused on developing the next generation of casual games based on the PlayStation ( R ) BRAVIA family of video game machines for the North American market . Bluepoint is a wholly owned subsidiary of Bluehill ID Holdings L.P.
\end{tabular}

\vspace{2mm}

\caption{ Sample generations from a calibrated, state-of-the-art Transformer model trained on the GBW corpus, seeded with prefixes of sentences (in italics) from the holdout validation set.}
\label{tab:gbw_gen_table}
\vspace{-20pt}
\end{table}

\section{Calibration and Memory}\label{sec:memory}
Defining a notion of memory in language models is challenging, and multiple equally sensible notions may co-exist. Here we present our choice from first principles.
Let us say that $\widehat W_t$ is a sample from a model at time $t$,
i.e. $\widehat W_t \sim \widehat \Pr(W_t| W_{<t} )$. Let us also
assume that $W_{< t} \sim \Pr(W_{< t}) $. We will define the memory
at gap $\tau$ as the mutual information
between $\widehat W_t$ and the distant past (those words greater than $\tau$
steps ago) conditioned on the subsequence $W_{t-\tau:t-1}$. Precisely,
\begin{align*}
  I_\tau &\coloneqq I(\widehat W_t; W_{< t-\tau} | W_{t-\tau:t-1}) = H(\widehat W_t|W_{t-\tau:t-1}) -H(\widehat W_t|W_{<t}) \, ,
\end{align*}
where we are not explicitly denoting the $t$ dependence in this definition\footnote{While
  we may attempt to estimate $I_\tau$ for a given $t$, we can remove
  the $t$ dependence by either
  defining this quantity by with an average over $t$ or by using
  appropriate stationarity assumptions. In our experiments, we
  average over $t$.}.

Intuitively, $I_t$ can be viewed as how much uncertainty (entropy) in the prediction $W_t$ the model is able to reduce by utilizing the deep past $W_{< t-\tau}$ in addition to the recent past $W_{t-\tau:t-1}$.

The difficulty in estimating this mutual information is due to
estimating $H(\widehat W_t|W_{t-\tau:t-1})$, which requires the marginalized model $\widehat{\Pr}(W_t|W_{t-\tau:t-1})$. To (even approximately) marginalize a model distribution $\widehat{\Pr}(W_t|W_{<t})$ over the deep past $W_{<t-\tau}$ is statistically difficult, since it requires the access to a pool of samples of $W_{<t}$ that share an \emph{common} recent past $W_{t-\tau:t-1}$. Nevertheless, we now show that it is possible to
obtain an upper bound (which is computationally efficient to estimate).

\paragraph{Upper bounding mutual information using calibrated models.}

In the above, we were considering the mutual information between
$\widehat W_t$ and $W_{< t-\tau}$ conditioned on $W_{t-\tau:t-1}$. Let
us now consider a more general setting, where we have a distribution $\Pr(Z,Y,X)$ where $Z$, $Y$, and $X$
are random variables. We wil eventually consider $Z, Y, X$ to be
$\widehat W_t, W_{t-\tau:t-1} W_{< t-\tau}$, respectively.

For  distributions $\cD(\cdot|Y,X)$ and $\widetilde \cD(\cdot|Y,X)$
and for $\alpha \in \mathbb{R}$, define 
\[
\cD_\alpha(Z|Y,X) := \cD(Z|Y,X) \cdot
\left(\widetilde \cD(Z|Y,X)\right)^\alpha / Z_\alpha \, .
\]
We say that $\cD(\cdot|Y,X)$ is calibrated to $\tilde \cD(\cdot|Y,X)$,
if $\cD = \cD_{\alpha=0}$ is unimprovable in that for all $\alpha$
\[
   \cediv{\Pr}{\cD} \leq  \cediv{\Pr}{\cD_\alpha} \, .
 \]
Note this condition is achievable due to that calibrating a model to
$\widetilde \cD(\cdot|Y,X)$  involves a one dimensional (convex)
estimation problem
(over $\alpha$).


\begin{theorem}\label{thm:memory}
Suppose we have a model $\widehat
\Pr(Z|X)$, and suppose  $\widetilde Z \sim \widetilde
\Pr(\cdot|X) $, where $\widetilde Z$ is dependent only on $X$
Suppose that $\widehat \Pr$ is calibrated to $\widetilde{
\Pr}$.
Then we have that:
$$
  I(\widehat Z; X | Y) 
  \leq \cediv{\Pr}{\widetilde \Pr} 
-H(\widehat Z |Y,X)\ ,\ \text{where:}
$$
\[
\cediv{\Pr}{\widetilde \Pr} = \E_{Y\sim \Pr} \E_{Z\sim \Pr(\cdot|Y)}
\left[\log \frac{1}{\widetilde \Pr(Z|Y)}\right] \, .
\]
\end{theorem}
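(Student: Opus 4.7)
The plan is to decompose the mutual information into marginal minus conditional entropy, upper bound the marginal entropy by a cross-entropy against $\widetilde{\Pr}$ via Gibbs' inequality, and then invoke the calibration hypothesis to replace an expectation taken under the model by one taken under the truth. Throughout, all entropies are computed under the joint $\widehat{\Pr}(Z|X,Y)\Pr(X,Y)$: the conditioning variables $(X,Y)$ come from the true distribution while $\widehat{Z}$ is drawn from the model.

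First I would write
\[
I(\widehat{Z};\, X \mid Y) \;=\; H(\widehat{Z} \mid Y) \;-\; H(\widehat{Z} \mid Y, X).
\]
The second term is exactly the $H(\widehat{Z}|Y,X)$ appearing on the right-hand side of the target inequality, so it suffices to show $H(\widehat{Z}\mid Y) \le \cediv{\Pr}{\widetilde{\Pr}}$. Nonnegativity of KL (i.e., the usual Gibbs inequality) gives, for any reference distribution $q(\cdot | Y)$ on $Z$,
\[
H(\widehat{Z} \mid Y) \;\leq\; -\,\E_{(X,Y)\sim \Pr}\,\E_{\widehat{Z}\sim\widehat{\Pr}(\cdot|X,Y)}\bigl[\log q(\widehat{Z}\mid Y)\bigr],
\]
and I would take $q = \widetilde{\Pr}$; crucially, $\widetilde{\Pr}(\cdot|Y)$ does not depend on $X$, which is what lets the right-hand side be written against the $(X,Y)$-marginal of the joint.

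The heart of the argument is then to show that this Gibbs upper bound equals $\cediv{\Pr}{\widetilde{\Pr}}$, i.e., that the inner expectation of $\log \widetilde{\Pr}(\cdot|Y)$ is the same whether its argument is drawn from the model $\widehat{\Pr}(\cdot|X,Y)$ or from the truth $\Pr(\cdot|X,Y)$. This is precisely a first-order optimality condition for the tilted family $\widehat{\Pr}_\alpha(Z|X,Y) = \widehat{\Pr}(Z|X,Y)\,\widetilde{\Pr}(Z|Y)^\alpha / Z_\alpha(X,Y)$ featured in the definition of calibration. Since $\alpha \mapsto \cediv{\Pr}{\widehat{\Pr}_\alpha}$ is a log-partition plus linear function of $\alpha$, it is convex, and calibration says $\alpha=0$ is its minimizer. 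Differentiating at $\alpha=0$ and using the standard exponential-family identity $\partial_\alpha \log Z_\alpha(X,Y)\big|_{\alpha=0} = \E_{\widehat{Z}\sim \widehat{\Pr}(\cdot|X,Y)}[\log\widetilde{\Pr}(\widehat{Z}|Y)]$ yields
\[
\E_{(X,Y,Z)\sim \Pr}\bigl[\log \widetilde{\Pr}(Z|Y)\bigr] \;=\; \E_{(X,Y)\sim \Pr}\,\E_{\widehat{Z}\sim \widehat{\Pr}(\cdot|X,Y)}\bigl[\log \widetilde{\Pr}(\widehat{Z}|Y)\bigr],
\]
whose left-hand side is $-\cediv{\Pr}{\widetilde{\Pr}}$ (since $\widetilde{\Pr}$ does not depend on $X$) and whose right-hand side is exactly the Gibbs bound above.

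Chaining the pieces, $H(\widehat{Z}\mid Y) \leq \cediv{\Pr}{\widetilde{\Pr}}$, and subtracting $H(\widehat{Z}\mid Y,X)$ gives the theorem. The hard part is the third step: extracting the moment-matching identity from calibration by differentiating the log-partition under the expectation and recognizing the result as the equality of two laws' expectations of $\log\widetilde{\Pr}(\cdot|Y)$. The calculation itself is routine and mirrors the exponential-family argument used in Lemma~\ref{lemma:calibration}, but conceptually it is what turns a non-constructive Gibbs bound into the computable, sample-based quantity $\cediv{\Pr}{\widetilde{\Pr}}$.
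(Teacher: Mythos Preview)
Your proposal is correct and follows essentially the same argument as the paper: decompose $I(\widehat Z; X\mid Y)=H(\widehat Z\mid Y)-H(\widehat Z\mid Y,X)$, bound $H(\widehat Z\mid Y)$ by the cross-entropy against $\widetilde{\Pr}$ via Gibbs, and use the first-order optimality condition of calibration at $\alpha=0$ to convert that cross-entropy under the model into $\cediv{\Pr}{\widetilde{\Pr}}$. The only difference is the order of presentation---you apply Gibbs first and then calibration, while the paper applies calibration first and then Gibbs---which is purely cosmetic.
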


\paragraph{Memory estimation.}
\label{sec:memory_estimate}

We first learn another $\widetilde W_t \sim \widetilde
\Pr(\cdot|W_{t-\tau:t-1}) $, and then  calibrate $\widehat \Pr$ to
$\widetilde \Pr$.

\begin{corollary}\label{corr:memory}
Suppose $\widehat \Pr^{\mathrm{cal}}(\cdot|W_{< t})$ is a model
calibrated to $\widetilde \Pr(\cdot|W_{t-\tau:t-1}) $. For
a random variable, $\widehat W^{\mathrm{cal}}_t \sim \widehat
\Pr^{\mathrm{cal}}(\cdot|W_{< t})$, 
we have that:
$$
  I(\widehat W^{\mathrm{cal}}_t; W_{< t-\tau} | W_{t-\tau:t-1})
\leq  \cediv{\Pr}{\widetilde \Pr}  - H(\widehat W^{\mathrm{cal}}_t  |W_{< t}) ,\ \text{where: }
$$
\[
\cediv{\Pr}{\widetilde \Pr} = \E_{W_{t-\tau:t}\sim \Pr}
\left[\log \frac{1}{\widetilde \Pr(W_t|W_{t-\tau:t-1})}\right] \, .
\]
\end{corollary}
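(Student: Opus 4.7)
The plan is to observe that Corollary~\ref{corr:memory} is a direct specialization of Theorem~\ref{thm:memory} to the memory-estimation setting. I would set up the correspondence $Z \leftrightarrow W_t$, $Y \leftrightarrow W_{t-\tau:t-1}$, and $X \leftrightarrow W_{<t-\tau}$, so that the full conditioning context $W_{<t}=(X,Y)$ in the corollary matches the $(X,Y)$ pair in the theorem, and the prediction random variable $\widehat W_t^{\mathrm{cal}}$ plays the role of $\widehat Z$.

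Next I would verify that the hypotheses of Theorem~\ref{thm:memory} hold under this substitution. The model $\widehat \Pr^{\mathrm{cal}}(\cdot|W_{<t})$ is a conditional on $(X,Y)$ as required; the auxiliary distribution $\widetilde \Pr(\cdot|W_{t-\tau:t-1})$ depends only on the ``recent'' variable $Y$, ignoring the deep past $X$, as the theorem demands; and the calibration hypothesis in the corollary is literally the calibration hypothesis of the theorem, since the exponential-tilt definition is unchanged.

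Reading off the conclusion of Theorem~\ref{thm:memory} under these substitutions then yields
\[
  I(\widehat W_t^{\mathrm{cal}};\, W_{<t-\tau} \mid W_{t-\tau:t-1}) \;\le\; \cediv{\Pr}{\widetilde \Pr} \;-\; H(\widehat W_t^{\mathrm{cal}} \mid W_{<t}),
\]
which is exactly the bound claimed in the corollary. The only bookkeeping step I would write out explicitly is the rewriting of the theorem's cross-entropy $\E_{Y \sim \Pr}\E_{Z \sim \Pr(\cdot|Y)}[\log 1/\widetilde \Pr(Z|Y)]$ as the stated $\E_{W_{t-\tau:t}\sim \Pr}[\log 1/\widetilde \Pr(W_t|W_{t-\tau:t-1})]$; these coincide because $\widetilde \Pr(W_t|W_{t-\tau:t-1})$ does not depend on $W_{<t-\tau}$, so an expectation over the deep past can be freely introduced or suppressed by the tower property.

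Since the corollary is essentially a relabelling, I do not expect any real obstacle in its proof; all the analytical content lives in Theorem~\ref{thm:memory}, whose proof must use the first-order optimality of the calibration parameter to identify $\E[\log \widetilde \Pr(\widehat Z|Y)]$ under the model-sampled joint with $-\cediv{\Pr}{\widetilde \Pr}$, and then invoke Gibbs' inequality to bound $H(\widehat Z \mid Y)$ by $\cediv{\Pr}{\widetilde \Pr}$ before expanding $I(\widehat Z; X|Y) = H(\widehat Z|Y) - H(\widehat Z|Y,X)$.
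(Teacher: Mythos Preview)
Your proposal is correct and matches the paper's approach: the paper gives no separate proof of Corollary~\ref{corr:memory}, treating it as an immediate specialization of Theorem~\ref{thm:memory} under exactly the substitution $Z=W_t$, $Y=W_{t-\tau:t-1}$, $X=W_{<t-\tau}$ that you describe (this is the correspondence the paper announces just before stating the theorem). Your remarks on the bookkeeping for the cross-entropy expression and on where the analytical content lives (first-order optimality of $\alpha$ plus Gibbs' inequality in the proof of Theorem~\ref{thm:memory}) are also accurate.
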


This corollary gives us a means to efficiently provide upper bounds on
the mutual information. The key is that since $\widetilde \Pr$ is
efficiently computable, we can directly estimate $\mathrm{CE}(\Pr
||\widetilde \Pr)$ through Monte Carlo estimation. We measure the upper bounds on $I_\tau$ of a LSTM model with trained limited-memory models $\widetilde \Pr$ (see details in the supplementary material) and report them in Figure~\ref{fig:memory}. As expected, the memory estimate gradually decays with longer $\tau$, indicating that the models make more use of the recent past to generate text.

  




\begin{figure}
\begin{minipage}[b]{.48\textwidth}
\centering
\begin{subfigure}[b]{\linewidth}
\includegraphics[width=0.9\linewidth]{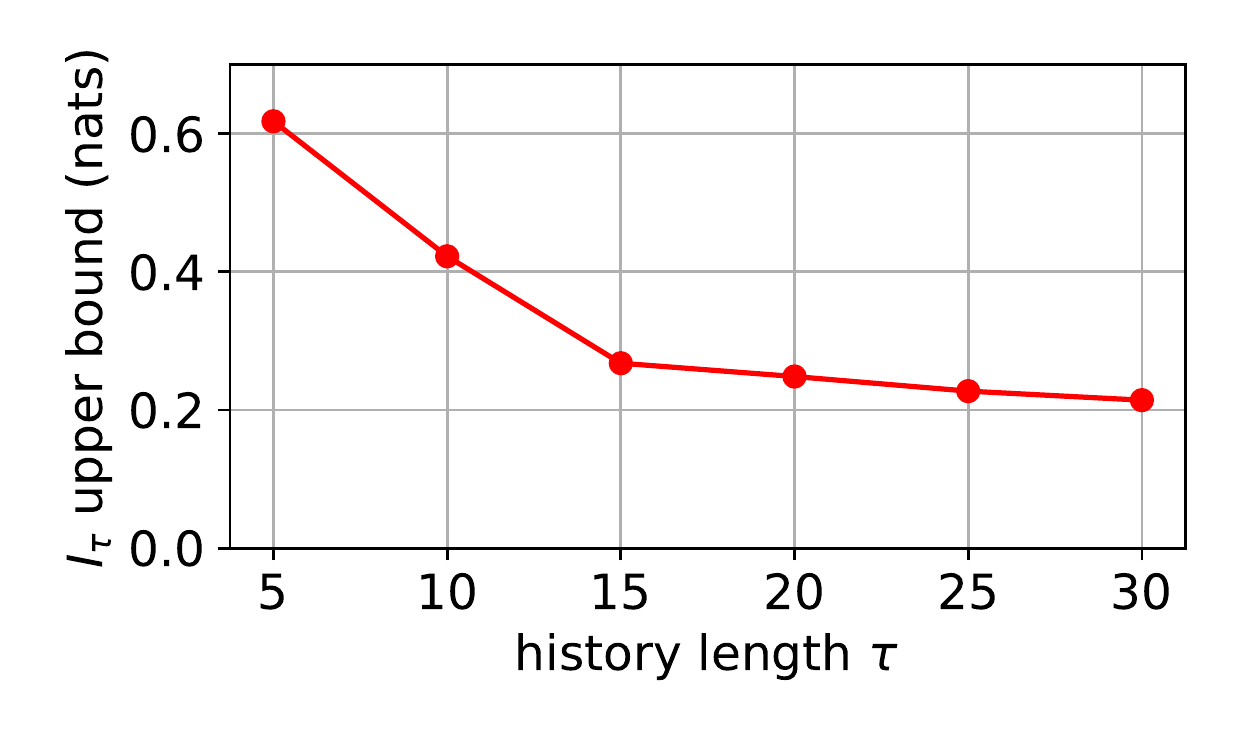}
\end{subfigure}
\end{minipage}
\hspace{0.5cm}
\begin{minipage}[b]{0.48\textwidth}
\centering
\begin{subtable}[b]{\linewidth}
\centering
\resizebox{0.9\columnwidth}{!}{
\begin{tabular}[b]{|c|c|c|c|}
\hline
$\tau$  & $\mathrm{CE}(\Pr ||\widetilde \Pr)$ & $I_\tau$ upper bound & $\alpha^*$ \\ \hline 
$5$     &  4.8144 & 0.6180 & 0.003515 \\ \hline
$10$    &  4.5258 & 0.4226 & -0.01041 \\ \hline 
$15$    &  4.4166 & 0.2678 & -0.00447 \\ \hline 
$20$    &  4.3347 & 0.2485 & -0.02268 \\ \hline 
$25$    &  4.2777 & 0.2274 & -0.01814 \\ \hline 
$30$    &  4.2408 & 0.2143 & -0.02323 \\
\hline
\end{tabular}
}
\end{subtable}
\vspace{5mm}
\end{minipage}
\vspace{-8mm}
\caption{ \emph{Left:} Plot of the upper bound on $I_\tau$ derived from calibrated models. \emph{Right:} The measurements of the upper bound on mutual information, the cross entropy of the limited memory model $\widetilde \Pr$ as well as the optimal calibration coefficient $\alpha^*$ for various time lengths $\tau$. Details of the model used here can be found in the supplementary material. } 
\label{fig:memory}
\vspace{-15pt}
\end{figure}
\section{Conclusion}
\label{sec:conclusion}

We have introduced a calibration-based approach to detect and provably correct the discrepancies between the long-term generations of language models and the true distributions they estimate sequentially. In particular, for state-of-the-art neural language models, we have observed large degradations of the entropy rate under iterative generation, and a proposed first-order correction which is both computationally tractable and effective. Using the same calibration approach, we have derived estimators for the amount of information extracted by these models from the deep past.

Aside from the empirical findings and improvements, we hope that this work will inspire a more principled line of discourse on the quality of long-term generations in language models. It remains an interesting open problem to study other "future-aware" generation-improving heuristics (beam search, reverse language models, GANs) in this framework of calibration.

\subsection*{Acknowledgments}
S. K. gratefully acknowledges funding from the Washington Research Foundation for Innovation in Data-intensive Discover, the ONR award N00014-18-1-2247, and NSF Award CCF-1703574.

\bibliography{main}
\bibliographystyle{plainnat}

\newpage
\appendix

\section{Proofs for Section~\ref{sec:ent_rates}}
\label{sec:appa}

\begin{proof} (of Lemma~\ref{lemma:pinsker})
We have
\begin{eqnarray*}
\left|\E_{w_{1:T}\sim \Pr} [f(w_{1:T})] - \E_{w_{1:T}\sim \widehat\Pr}
  [f(w_{1:T})] \right|
&=&
\left|\sum_{w_{1:T}} \left(\Pr(w_{1:T}) - \widehat \Pr(w_{1:T})\right)
f (w_{1:T} )\right| \\
&\leq&
\left|\Pr - \widehat \Pr\right|_1
       B \\
  &\leq&
\sqrt{2\mathrm{KL}(\Pr || \widehat \Pr))}
B
\end{eqnarray*}
where we have used Holder's and Pinsker's inequalities~\cite{Cover:2006:EIT:1146355}.
\end{proof}

\begin{proof} (of Corollary~\ref{corr:ent_rate_amplification})
  First observe:
  \[
    \log \frac{1}{\widehat \Pr^{(\eps)}(w_{1:T})}
    \leq \log \frac{1}{(1-\eps) \widehat \Pr(w_{1:T}) } =
    \log \frac{1}{\widehat \Pr (w_{1:T}) }
    -\log(1-\eps) \leq
    \log \frac{1}{\widehat \Pr (w_{1:T}) }
+\eps
\]
and that:
\begin{equation}\label{eq:hard_bound}
\log \frac{1}{\widehat \Pr^{(\eps)}(w_{1:T})} \leq \log
\frac{M^T}{\eps}\, .
\end{equation}

For the first claim,  we have
\[
  \frac{1}{T} \mathrm{KL}(\Pr || \widehat \Pr^{(\eps)})
  = \frac{1}{T}
    \E_{w_{1:T}\sim \Pr}
  \left[\log \frac{ \Pr (w_{1:T})}{\widehat \Pr^{(\eps)}
      (w_{1:T})}\right] \leq (1+\frac{1}{T})\eps \, .
\]
using our assumption in Equation~\ref{ass:bayes}.

For the second claim, taking $f=\log \frac{1}{\widehat \Pr^{(\eps)}
  (w_{1:T} )}$ with Lemma~\ref{lemma:pinsker}, we have:
\begin{eqnarray*}
\left|\mathrm{CE}(\Pr || \widehat \Pr^{(\eps)}) - \mathrm{EntRate}(\widehat \Pr^{(\eps)})\right|
&\leq&
\frac{1}{T} \sqrt{2\mathrm{KL}(\Pr || \widehat \Pr^{(\eps)})} \left| \log \frac{1}{\widehat \Pr^{(\eps)}} \right|_\infty
\\
&=&
\frac{1}{T} \sqrt{2\eps (T+1)}
\log \frac{M^T}{\eps} \, ,
\end{eqnarray*}
which completes the proof.
\end{proof}

\begin{proof} (of Lemma~\ref{lemma:calibration})
By definition,
\[
  \mathrm{CE}(\Pr ||\widehat \Pr_\alpha)
  :=
  \mathrm{CE}(\Pr ||\widehat \Pr)
  -\frac{\alpha}{T}   \E_{w_{1:T} \sim \Pr}[ f(w_{1:T})] +
  \frac{1}{T}\log(Z_\alpha)\, , 
\]
we have:
\[
\frac{\partial \mathrm{CE}(\Pr ||\widehat \Pr_\alpha)}{\partial \alpha} =
 \frac{1}{T} \left(-\mu_{\Pr}(f) + \mu_{\widehat \Pr_\alpha}(f)\right)
\]
The first claim now follows from optimality of $\alpha^*$.

For the second claim, 
\begin{eqnarray*}
\frac{\partial^2  \mathrm{CE}(\Pr ||\widehat \Pr_\alpha)}{\partial^2 \alpha}
  &=& \frac{1}{T}\frac{\partial^2  \log(Z_\alpha)}{\partial^2 \alpha}\\
  &=&   \frac{1}{T}      \frac{\partial }
      {\partial \alpha}
      \frac{\sum_{w_{1:T}}f(w_{1:T})\exp(\alpha f(w_{1:T})) \cdot \widehat
  \Pr(w_{1:T})}
      {\sum_{w_{1:T}}\exp(\alpha f(w_{1:T})) \cdot \widehat
  \Pr(w_{1:T})}\\
    &=&  \frac{1}{T}\sigma^2_{\widehat\Pr_\alpha}(f)
=\frac{1}{T}\sigma^2_{\widehat\Pr_\alpha}(f)
  \leq \frac{\sigma^2_+}{T} \, .
\end{eqnarray*}

By Taylor's theorem, we have:
\[
\mathrm{CE}(\Pr ||\widehat \Pr_\alpha) \leq \mathrm{CE}(\Pr ||\widehat \Pr)
- \alpha\cdot \frac{1 }{T} \left(\mu_{\Pr}(f) - \mu_{\widehat \Pr_\alpha}(f)\right)
+ \frac{\alpha^2}{2} \cdot \frac{\sigma^2_+}{T} \, .
\]
Taking the the $\alpha$ which minimizes the upper bound, leads to the
second claim.
\end{proof}


\begin{remark}
\label{remark:tightness}
(Sharpness) If $\eps \geq \frac{1}{T}$, then there exists a problem
where the bound is sharp and $\mathrm{EntRate}(\widehat \Pr)
$ takes on the maximal value of $O(\log M)$. As an example,  consider a model
$\widehat \Pr$, that starts by generating words under the true
distribution $\Pr$ and has a
$\frac{1}{T}$ probability of transitioning into a mode in which it generates words uniformly at random thereafter.
\end{remark}

\begin{proof} (of Theorem~\ref{thm:ent_rate_calibration})
We can apply the previous lemma using
  \[f=\log\frac{1}{\widehat\Pr(w_{1:T})} \, ,\]
and so our calibration condition implies:
\[
  0= \mu_{\Pr}(f) -
    \mu_{\widehat \Pr_{\alpha^*}} (f) =
  -\left( \mu_{\Pr}(\log \widehat\Pr) -
    \mu_{\widehat \Pr_{\alpha^*}} (\log \widehat\Pr) \right)  \, .
\]

  Now observe that:
\[
  T\cdot   \mathrm{CE}(\Pr || \widehat \Pr_{\alpha^*}) 
  = \mu_{\Pr}(-(1+\alpha^*) \log \widehat\Pr+\log Z_{\alpha^*}) =
  -(1+\alpha^*) \mu_{\Pr}(\log \widehat\Pr) +\log Z_{\alpha^*}
\]
and, similarly,
\[
  T\cdot \mathrm{EntRate}(\widehat\Pr_{\alpha^*})
  =  -(1+\alpha^*) \mu_{\widehat \Pr_{\alpha^*}} (\log \widehat\Pr)
  +\log Z_{\alpha^*}\, .
\]

These imply:
\[
  \mathrm{CE}(\Pr || \widehat \Pr_{\alpha^*})-  \mathrm{EntRate}(\widehat\Pr_{\alpha^*})
  = -\frac{1}{T}(1+\alpha^*) \left( \mu_{\Pr}(\log \widehat\Pr) -
    \mu_{\widehat \Pr_{\alpha^*}} (\log \widehat\Pr) \right) =0
\, ,
\]
which completes the proof of the first claim.

The proof of the second claim uses
\[
\mu(f) -  \mu_{\widehat \Pr}(f) = T \left(\mathrm{CE}(\Pr || \widehat
  \Pr) - \mathrm{EntRate}(\widehat\Pr)\right) \, ,
\]
and, by Equation~\ref{eq:hard_bound}, 
\[
\sigma^2_+ \leq T\log M +\log(1/\eps) \, ,
\]
which completes the proof.
\end{proof}

Now we move on to the proof of Corollary~\ref{corr:lookahead}.

Suppose $f(W_{\leq t})$ be a function of $W_{\leq t}$. For a conditional distribution,
$\cD(W_{1:T})$, let us now define:
\begin{eqnarray*}
\bar\mu_{\cD}(f)
=
\frac{1}{T} \sum_{t=1}^T \E_{w_{< t} \sim \Pr}  \, \, \E_{w_t \sim
  \widehat \cD(\cdot|w_{< t})} [ f(w_{\leq t})] \, .
\end{eqnarray*}

Define:
\[
  \widehat P_{t,\alpha}(w_t|w_{<t}) := \frac{1}{Z_{\alpha,t}}
  \exp(\alpha f(w_{\leq t})) \cdot
  \widehat \Pr(w_t|w_{< t})
\]
and
\[
\widehat \Pr_\alpha(w_{1:T})  := \widehat P_{1,\alpha}(w_1) \widehat
P_{2,\alpha}(w_2|w_1)  \ldots \, .
\]

\begin{lemma}\label{lemma:1gap}
Suppose $f \leq \sigma_+^2$. Let
\[
\alpha^* = \argmin_{\alpha}   \mathrm{CE}(\Pr || \widehat \Pr_\alpha)\, .
\]
We have that:
\[
  \bar\mu_{\Pr}(f) - \bar\mu_{\widehat \Pr_{\alpha^*}}(f)
= 0
\]
and that
\[
  \mathrm{CE}(\Pr ||\widehat \Pr_{\alpha^*}) \leq   \mathrm{CE}(\Pr ||\widehat
  \Pr) - \frac{( \bar\mu(f) -  \bar\mu_{\widehat \Pr}(f))^2}{\sigma^2_*} \, .
\]
\end{lemma}

\begin{proof}
  (sketch)
  The proof is identical to that of Lemma~\ref{lemma:calibration},
  with the addition of using linearity of expectation.
\end{proof}

\section{Proofs for Section~\ref{sec:memory}}
\label{sec:appb}

\begin{proof}(of Theorem~\ref{thm:memory})
It is convenient to define the distribution:
\[
\mathcal{D}(Z,Y,X) = \widehat \Pr(Z|X,Y) \cdot \Pr(Y,X) \, .
\]
We then have:
\[
  I(\widehat Z; X | Y) 
  =  H(\widehat Z |Y) - H(\widehat Z |Y,X)
\]
by the defintion of the mutual information.

The proof consists of showing that:
\begin{equation*}
H(\widehat Z |Y)=  E_{Y,Z \sim \mathcal{D}} \log \frac{1}
         {\mathcal{D} (Z|Y)}  \leq \mathrm{CE}(\Pr ||\widetilde \Pr)
         \, .
\end{equation*}

Let us take
$\widehat \Pr_\alpha(Z|X,Y) = \widehat \Pr(Z|X,Y) \cdot
\left(\widetilde \Pr(Z|X)\right)^\alpha / Z_\alpha$. The zero gradient
condition for the optimality at $\alpha=0$ implies:
\begin{eqnarray*}
  0 &=& \frac{\partial \mathrm{CE}(\Pr || \widehat \Pr_\alpha)} {\partial \alpha}\Big \vert_{\alpha=0} \\
    &=& E_{X,Y \sim \Pr}\left[ -E_{Z \sim \Pr(\cdot|X,Y)} \log \widetilde \Pr(Z|Y)
        + E_{Z \sim \widehat \Pr(\cdot|X,Y)} \log \widetilde \Pr(Z|Y)\right]\\
    &=& -E_{Y \sim \Pr}[ E_{Z \sim \Pr(\cdot|Y)} \log \widetilde \Pr(Z|Y)
        + E_{X,Y \sim \Pr}[ E_{Z \sim \widehat \Pr(\cdot|X,Y)} \log \widetilde \Pr(Z|Y)]\\
    &=& \mathrm{CE}(\Pr ||\widetilde \Pr)
        + E_{X,Y \sim \Pr}[ E_{Z \sim \widehat \Pr(\cdot|X,Y)} \log
        \widetilde \Pr(Z|Y)]\, .
\end{eqnarray*}
This implies:
\begin{eqnarray*}
  \mathrm{CE}(\Pr ||\widetilde \Pr) &=& 
E_{X,Y \sim \Pr}[ E_{Z \sim \widehat \Pr(\cdot|X,Y)} \log 
\frac{1}{\widetilde \Pr(Z|Y) } ] \\ 
&=&    E_{X,Y,Z \sim \mathcal{D}} \log\frac{1}
    {\widetilde \Pr(Z|Y)}  \\
  &=&    E_{Y,Z \sim \mathcal{D}} \log \frac{1}
        {\widetilde \Pr(Z|Y)}  \\
  &\geq&    E_{Y,Z \sim \mathcal{D}} \log \frac{1}
         {\mathcal{D} (Z|Y)}  \\
    &=&    H(\widehat Z |Y) \, ,
\end{eqnarray*}
where the last step uses the definition of $\widehat Z$ and Jensen's
inequality.
\end{proof}

\section{Experimental Details}\label{sec:experiments}
In this section, we outline the experimental setups used to obtain the empirical results throughout the paper.
For the calibration and memory experiments (Table~\ref{table:ent_rate_amplification} row 1, Figure~\ref{fig:entrate} (left), Figures~\ref{fig:entrate-fix},~\ref{fig:memory}), our base model is a 3-layer LSTM with with 400 embedding dimension and 1150 hidden nodes. We train it on the Penn Treebank (PTB) corpus \cite{marcus1993building}, following the setup of~\cite{merityRegOpt} and~\cite{merityAnalysis} for 500 epochs using SGD with batch size 20 and BPTT length 70. The trained base model achieves 64.3 validation perplexity and 58.3 test perplexity.

The limited-memory models $\widetilde \Pr(\cdot|W_{t-\tau:t-1})$ used for the memory estimation in Section~\ref{sec:memory_estimate} share the same architecture as our base model while, during training, the hidden states is re-initialized after reading every $\tau$ tokens ($\tau$ takes value from $\{5, 15, \ldots, 30\}$).

Finally, for the entropy rate measurements of larger-scale state-of-the-art language models (Table~\ref{table:ent_rate_amplification} rows 2-4, Figure~\ref{fig:entrate} (right)), we used the pretrained weights published alongside \cite{jozefowicz2016exploring,radford2019language} for rows 2 and 4, while we trained the model using the \texttt{tensor2tensor} framework. The model for row 2 is an LSTM with CNN-embedded inputs, trained on the Google Billion Words (GBW) corpus. The other two are Transformer \cite{vaswani} models trained on GBW (row 3), and an proprietary corpus derived from a web crawl (WebText; row 4). For GPT-2, since the authors have not published training or validation data, we used the text of several New York Times articles as a stand-in validation set; the cross entropy loss is comparable to that reported on the validation set. The entropy rate amplification plot in Figure~\ref{fig:entrate} (bottom) corresponds to the setup from row 4.

To measure the conditional entropy after $t$ generations, we measured the empirical conditional entropy of the $t$-th word over $>500$ independent generations, which were produced by the standard way of iteratively sampling from the next predicted conditional distribution, seeded with ground-truth text up to $>100$ random points in the validation set. We used the entropy rate at $t = 700$ as a proxy for the asymptotic limit in Table~\ref{table:ent_rate_amplification}.

\section{Additional Generation Samples}
In this section, to provide a better sense of the qualitative effect of calibration, we provide below some additional generations, seeded by 10-token prefixes of the holdout (validation) sentences from the Google Billion Words dataset. Here, we used the model we trained for row 3 of Table~\ref{table:ent_rate_amplification}. To identify a failure mode for the uncalibrated model, we selected the seed prefixes which resulted in unusually long generations by the uncalibrated model.

\begin{longtable}{|p{7cm}|p{7cm}|}
\hline \textbf{Original model} & \textbf{Calibrated model} \\\hline 
\ul{Actual results could differ materially from those indicated by} these forward-looking statements as a result of numerous factors including the risks associated with the timely and efficient completion and integration of the Temporary Liquidity Guarantee Department 's supervision into the commercial , open market , solar energy , energy efficiency , electric utility transmission , and water demands of residential and business customers , Comcast 's ability to successfully implement its business plan , timing of completion of the acquisition and the effectiveness of the efforts and strategies involved in the integration of Rhapsody , timing of regulatory and client approvals and availability of key enhancements . & \ul{Actual results could differ materially from those indicated by} these forward-looking statements as a result of a variety of factors , including but not limited to ( i ) the risk that the tender offer could close in one or more manner or at all ; ( ii ) risks associated with conducting business in foreign jurisdictions ; ( iii ) difficulties in combining some or all of the businesses under one roof ; ( iv ) decreased demand for electricity , natural gas and other energy products , including adverse effects on the pricing of oil and natural gas ; and ( v ) the risks associated with doing business internationally . \\ \hline
\ul{Actual results could differ materially from those indicated by} these forward-looking statements as a result of various important factors , including , without limitation : changes in general economic and business conditions , including more difficult real estate environments ; declines in information technology spending ; continued availability of capital and government regulations ; changes in general economic and business conditions ; the possibility that extended unemployment and healthcare policies may change , or may reduce access to quality care services ; failure to obtain adequate and affordable medications ; changes in certain CME / CE product mix ; disruption in CME credit markets ; uncertainty of the outcomes of regulatory investigations of companies in which the Company has an interest ; dependence on suppliers for most of its products ; consolidation among financial institutions ; ability to attract and retain skilled personnel ; changes in rapidly changing technology and regulatory environments ; arrogance and complacency among financial analysts ; the impact of competition ; inability to retain and motivate senior management ; difficulties in the integration of acquired businesses ; the effects of redundancy and loss of key employees ; litigation , including claims and the challenge of insurance practices ; uncertainties relating to litigation ; risks related to investigations by other companies ; inadequate information systems ; the impact of reduced availability of ; * assumptions upon such companies using such as ours to gauge CNET 's financial condition ; and other factors . & \ul{Actual results could differ materially from those indicated by}  such forward-looking statements as a result of various important factors , including those discussed in the company 's periodic reports that are filed with the Securities and Exchange Commission and available on the SEC 's website at www.sec.gov. \\ \hline
\ul{Actual results could differ materially from those indicated by} such forward-looking statements as a result of a variety of factors , including our ability to improve our liquidity . Among these factors are changes in the general economy , changes in political and economic conditions , changes in interest rates , changes in technology and implementation of regulatory policies and legislation , the direction of interest rates and changes in the banking industry , changes in loan prepayment activity , changes in consumer preferences and consumer and business lending markets , legislation or public compliance with applicable laws and regulations and changes in the business or regulatory environment . We caution you that there are many uncertainties that could cause actual results to differ materially from those indicated in the forward-looking statements . Among them are the risk factors that could cause results to differ from those expressed in the forward-looking statements . These factors include , but are not limited to : general economic and business conditions , including the financial markets ; fluctuations in interest rates ; government regulation of the financial services industry and possible failures ; planning assumptions and estimates ; potential funding requirements ; unexpected changes in cost increases ( including goodwill impairment ) ; competition ; the potentially lengthy , protracted U.S. recession ; and migratory consumer and business conditions . & \ul{Actual results could differ materially from those indicated by}  these forward-looking statements as a result of various important factors , including those discussed in the " Risk Factors " section of the Company 's Annual Report on Form 10-K for the most recently ended fiscal year . \\ \hline
\ul{Bluepoint Games , Inc. is a highly experienced} and multi-faceted publisher of licensed virtual worlds for gamers , developers and technology professionals . The company is based in Vancouver , Canada . BlueKai 's innovative games are distributed by Devices EA , LLC , and Club Penguin . BlueKai owns and is the exclusive licensor of Scrabulous . BluetoothQ Interactive Inc. has acquired JoShear-Swain Media , LLC , a premier developer and publisher of community based games for the handheld game device . For further information , please visit : www.netgear.com / ngcleveld . Sprint 's fantasy game publisher and Web doing business within the Entertainment Group is James Upon , CEO of MyNetSheltetWeb and the three previous Developers of MySQL . Based in Redwood City , California , BlueMountain is the leader in franchise and game development for the massively multiplayer online game . & \ul{Bluepoint Games , Inc. is a highly experienced} gaming and entertainment company with several renowned blockbuster franchises including PC , GameHouse ( ( R ) ) GameHouse ( ( R ) ) , Heavenly Sword ( ( TM ) ) , EverQuest ( R ) , Untold Story ( TM ) and EverQuest ( R ) II . Through its wholly-owned subsidiary , Bluehill ID ( R ) , the Bluehill ID logo and tagline are registered trademarks of Bluehill ID Corporation and its subsidiaries in the U.S. and in other countries .
 \\ \hline
\ul{Bluepoint Games , Inc. is a highly experienced} gaming , entertainment and mobile games company with a vertically integrated portfolio including : games ( TM ) , social network , mobile , casual games , MMORPG , production , distribution , and licensing including its flagship games , SUIT and TIMMERIX ( TM ) , as well as its award-winning gaming , basketball and entertainment network . In order to create a highly integrated , pure and socially responsible Game ( R ) family , Bluepoint has collaborated with Amplify Systems International , Inc. on various titles for PlayStation ( R ) 2 , PLAYSTATION 3 ( R ) 5 , Wii ( TM ) 3 , PS3 , Wii ( TM ) ( and PS3 titles ) as well as PC games for PC , PSP , POOL , Wii ( TM ) ( and successor title ) and IP ( R ) , in addition to its focused gaming , entertainment and communication services . BlueBay 's exclusive licensee worldwide licensee of the Bluepoint ( TM ) ZMFAO Gateway series , it is the world 's leading portable gaming , PC and mobile phone company . For more information , see UNK , Inc. and " Oakpoint : ZWC 's Community Health Business Development Center . & \ul{Bluepoint Games , Inc. is a highly experienced} licensing , gaming and entertainment firm focused on developing the next generation of casual games based on the PlayStation ( R ) BRAVIA family of video game machines for the North American market . Bluepoint is a wholly owned subsidiary of Bluehill ID Holdings L.P. \\ \hline
\ul{Bluepoint Games , Inc. is a highly experienced} , innovative entertainment sports gaming company whose products and services are used by some of the most recognized and respected names in the world of gaming including : Pokemon , Macau ( Valve ) , Quattro , Super Smash Bros. , Good Neighbor Games , IGN Games , Vail Resorts , Kania ( Ocean Spray , Pemberton and Roatenham ) , PURE Holdings , TeenNick , National Amusements , SEGA Games , Cirrus ( Aircraft ) and www.netapool.com.
 & \ul{Bluepoint Games , Inc. is a highly experienced} player in the growing genre of casual games for both casual and active gaming enthusiasts . Bluepoint is an early stage Company with a significant following among youth and adults in Europe and the United States with an impressive track record in global on-line gaming opportunities . \\ \hline
 \ul{Nursing Homes : Genworth 's 2009 Cost of Care Survey} , conducted by the Robert Wood Johnson Foundation and released today , reveals the extent to which members of the U.S. population adheres to practices recommended since 1995 , including : a rolling three-hour " Python for Life " that fell asleep from 11 p.m. to 2 a.m. , sleep time from 11 p.m. to 3 a.m. , spare time from 8 a.m. to 9 p.m. , and use of state-of-the art non-invasive technologies . A remodeling and refurbishment of hospital facilities is underway as the nation 's economy begins to gain momentum . Similar to the previous years , Thinking About Health - Hear how health plans are working to address various congressional proposals to advance best practices in patient care and provide greater accountability , advocacy and transparency to consumers .
 & \ul{Nursing Homes : Genworth 's 2009 Cost of Care Survey} is based on interviews with 516 family , friends and neighbors of insured and self-employed people conducted from Jan .
 \\ \hline
 \ul{Nursing Homes : Genworth 's 2009 Cost of Care Survey} is based on a double-blind , randomized , double-blind , placebo-controlled survey which involved an assessment of the cost-effectiveness of healthcare associated with an adequate diet and regular physical activity compared to its managed-care counterparts . The margin of error for this survey is + / - 3.3 percentage points at the 95 percent level of confidence .
 & \ul{Nursing Homes : Genworth 's 2009 Cost of Care Survey} , conducted by Harris Interactive , performed significantly worse than a control group of its peers who provided care but were not able to offer health care to their employees .
 \\ \hline
 \ul{Nursing Homes : Genworth 's 2009 Cost of Care Survey} , conducted by CareScout ( R ) and published in the April 2009 issue , evaluated findings from the 10-year , nearly 900,000-member Specialty Health Management Association 's more than 6,000 professionals living in the United States .
 & \ul{Nursing Homes : Genworth 's 2009 Cost of Care Survey} includes a series of health and medical cost reports on more than 100 home medical equipment and related products , including more than 3.9 million units of durable medical equipment . IBC 's cost of more than \$ 100 billion is a significant portion of Medicare spending on home health care .
 \\ \hline
\caption{More generations from a state-of-the-art Transformer model trained on GBW, seeded with prefixes of sentences from the holdout validation set.}
\label{tab:gbw_gen_table_2}
\end{longtable}

\end{document}